\setlist[itemize]{itemsep=0.01cm}
\setlist[enumerate]{itemsep=0.01cm}
\theoremstyle{plain}
\newtheorem{theorem}{Theorem}[section]
\newtheorem{proposition}[theorem]{Proposition}
\theoremstyle{definition}
\theoremstyle{remark}
\icmltitlerunning{RL-DAD\xspace: Sequential Bayesian Experimental Design for Non-Differentiable Implicit Models}
\DeclareMathOperator*{\argmax}{arg\,max}
\DeclareMathOperator{\ent}{H}
\begin{document}

\twocolumn[
\icmltitle{Policy-Based Bayesian Experimental Design for \\ Non-Differentiable Implicit Models}

\icmlsetsymbol{equal}{*}

\begin{icmlauthorlist}
\icmlauthor{Vincent Lim}{ucb}
\icmlauthor{Ellen Novoseller}{ucb}
\icmlauthor{Jeffrey Ichnowski}{ucb}
\icmlauthor{Huang Huang}{ucb}
\icmlauthor{Ken Goldberg}{ucb}

\end{icmlauthorlist}

\icmlaffiliation{ucb}{BAIR, UC Berkeley, Berkeley CA, USA}

\icmlcorrespondingauthor{Vincent Lim}{vincentklim@berkeley.edu}

\icmlkeywords{Machine Learning, ICML}

\vskip 0.3in
]

\printAffiliationsAndNotice{\icmlEqualContribution} %

\newcommand{\rloedlong}{Reinforcement Learning for Deep Adaptive Design\xspace}
\newcommand{\rloed}{RL-DAD\xspace}
\newcommand{\Vincent}[1]{{\color{red}[#1 -VL]}}
\newcommand{\Ellen}[1]{{\color{blue}[#1 -EN]}}
\newcommand{\Jeff}[1]{{\color{purple}[#1 -JI]}}
\newcommand{\mcA}{\mathcal{A}}
\newcommand{\mcS}{\mathcal{S}}
\newcommand{\mcP}{\mathcal{P}}

\newcommand{\para}[1]{{\textbf{#1} \hspace{3mm}}}

\begin{abstract}
For applications in healthcare, physics, energy, robotics, and many other fields, designing maximally informative experiments is valuable, particularly when experiments are expensive, time-consuming, or pose safety hazards. While existing approaches can sequentially design experiments based on prior observation history, many of these methods do not extend to implicit models, where simulation is possible but computing the likelihood is intractable. Furthermore, they often require either significant online computation during deployment or a differentiable simulation system. We introduce \rloedlong (\rloed), a method for simulation-based optimal experimental design for non-differentiable implicit models.
\rloed extends prior work in policy-based Bayesian Optimal Experimental Design (BOED) by reformulating it as a Markov Decision Process with a reward function based on likelihood-free information lower bounds, which is used to learn a policy via deep reinforcement learning. The learned design policy maps prior histories to experiment designs offline and can be quickly deployed during online execution.
We evaluate \rloed and find that it performs competitively with baselines on three benchmarks.  %

\end{abstract}
\section{Introduction}
Designing experiments to efficiently gather data is a key issue across scientific and engineering disciplines. Some experiments, especially physical experiments, can be expensive, time-consuming, or risky~\cite{lim2021planar, li2021roial}. %
Meanwhile, well-designed experiments can accelerate simulation tuning~\cite{lim2021planar}; benefit learning for robotics~\cite{zhao2021mutual}, for instance improving robot Sim2Real transfer~\cite{lim2021planar}; and improve perception models~\cite{huang2020mechanical}. Bayesian Optimal Experimental Design (BOED)~\cite{10.1214/aoms/1177728069} is one framework for designing experiments to maximize the information that observations yield about unknown parameters of interest.
Prior work has shown the potential of optimal experiment design in domains such as MRI image reconstruction~\cite{yin2021end}, protein engineering~\cite{wittmann2020machine}, and photonic nanostructure design~\cite{song2020mirrored}.

BOED seeks to design experiments to learn maximal information about the unknown parameters of a model.
This work considers \textit{sequential} BOED, in which each experiment is selected based on data from previous experiments.
Conventionally, sequential BOED algorithms select experiment designs that maximize a mutual information estimate
~\cite{kleinegesse2019efficient, kleinegesse2021sequential, foster2019variational}.
However, this process is myopic---only maximizing the information gain with respect to the next experiment---and is computationally impractical for online learning, as it requires an expensive mutual information maximization between each experiment.
Such methods often perform expensive online posterior updates~\cite{kleinegesse2019efficient} as optimizing the mutual information is doubly intractable~\cite{rainforth2018nesting}. 

This paper builds on recent work in policy-based BOED~\cite{foster2021deep}, which instead of performing expensive information gain maximizations during deployment, leverages simulation during training to learn a neural network-parameterized \textit{design policy} upfront. This policy maps an experiment history to the next design in a single forward pass, allowing for efficient online design decisions.
While~\citet{foster2021deep} focus on models with analytic likelihoods,~\citet{ivanova2021implicit} extend this work to implicit models (which do not assume closed-form likelihoods) with differentiable simulators. 

In many real-world domains, including robotics, medicine, particle physics, and protein folding, we cannot assume simulator differentiability. This work builds on prior work on policy-based BOED by proposing \emph{\rloedlong (\rloed)}, a general algorithm for sequential BOED for \emph{non-differentiable} implicit models that only assumes access to a black box simulator. \rloed frames BOED as a Partially Observed Markov Decision Process (POMDP) and uses reinforcement learning (RL) to select experiment designs %
and optimize the likelihood-free expected information-gain estimator derived in \citet{ivanova2021implicit}. 
In contrast to prior work on policy-based BOED, which selects experiments by directly optimizing expected information-gain estimates, we propose an RL reward function %
that incentivizes the RL agent to maximize the expected information gain. Learning a design policy via RL maintains the benefits of policy-based BOED---selecting designs non-myopically~\cite{sutton2018reinforcement} and requiring only network forward passes during deployment---while relaxing assumptions from prior work.

In experiments, we evaluate \rloed on three benchmarks and show that it approaches the performance of prior policy-based BOED methods that require stricter assumptions, namely access to 1) an analytic likelihood or 2) a differentiable simulator. In addition, results demonstrate that \rloed yields promising performance on problems that lack closed-form likelihoods and simulator differentiability. This paper contributes:
\begin{itemize}
    \item A formulation of policy-based BOED as a POMDP with a theoretically-justified dense reward signal.
    \item \rloed, an RL-based algorithm for sequential policy-based BOED.
    \item A policy network architecture suitable for BOED.
    \item Experiments showing that \rloed approaches the performance of prior work that requires stricter assumptions, particularly in high dimensions, while also extending to models for which prior policy-based BOED methods do not apply.
\end{itemize}

\section{Related Work}\label{sec:related-work}

Bayesian optimal experiment design (BOED)~\cite{10.1214/aoms/1177728069} aims to select experiments that yield maximal information about unknown model parameters. 
This work proposes a sequential BOED approach compatible with implicit models---which do not require a closed-form likelihood for observations---and non-differentiable simulators.

\para{Information gain estimation} Many BOED algorithms select experiment designs by maximizing estimates of the expected information gain with respect to the unknown model parameters. 
Maximizing the expected information gain is equivalent to maximizing the information-theoretic mutual information~\cite{cover2012elements} between the model parameters and experiment outcome. Prior work on BOED leverages various mutual information estimation techniques, including variational estimators such as MINE~\cite{pmlr-v80-belghazi18a} and SMILE~\cite{song2020understanding}, as well as contrastive sampling-based methods such as sPCE~\cite{foster2021deep} and InfoNCE~\cite{ivanova2021implicit}.

\para{Static BOED} Static BOED selects a set of experiments upfront. \citet{pmlr-v130-zhang21l} perform static BOED for likelihood-based models using the SMILE mutual information estimator~\cite{song2020understanding}, and \citet{pmlr-v119-kleinegesse20a} consider static BOED for implicit models by optimizing the MINE estimator~\cite{pmlr-v80-belghazi18a}. For implicit models, these estimators are typically optimized via evolutionary methods~\cite{beyer2006evolution} or Bayesian Optimization~\cite{snoek2012practical}

\para{Sequential BOED} Sequential BOED algorithms select experiment designs based on existing experiment outcomes. Many such algorithms, e.g.~\citet{watson2017quest+, foster2019variational, kleinegesse2019efficient, kleinegesse2021sequential}, alternate between 1) estimating information gain values 
and 2) optimizing the information gain to select the next design. For instance,~\citet{kleinegesse2019efficient, kleinegesse2021sequential} handle implicit models by estimating the log-density ratio between the prior and posterior, while~\citet{foster2020unified} propose a gradient-based approach that jointly estimates the information gain and optimizes the experiment design. Critically, these methods are myopic, in that each experiment design is chosen via greedy information gain optimization rather than by multi-step planning. In addition, these methods perform time-intensive mutual information estimation between each experiment. Policy-based BOED, discussed next, addresses both of these limitations.

\para{Policy-based BOED} Several recent sequential BOED works~\cite{huan2016sequential, foster2021deep, ivanova2021implicit} learn non-myopic \textit{design policies} for selecting experiments. These approaches amortize the cost of sequential BOED by training a policy upfront that maps an experiment history to a subsequent experiment design. While training this policy requires significant upfront compute, at deployment time, the trained policy can efficiently output experiment designs rather than optimizing an objective online between experiments. 

\citet{huan2016sequential} formulate BOED as a Markov decision process and leverage dynamic programming to obtain a policy mapping belief states to experiment designs. However, dynamic programming methods do not scale well to large state spaces. Meanwhile,~\citet{foster2021deep} introduce the DAD algorithm, which optimizes policy parameters via gradient ascent to maximize the information gain.
While these works require closed-form likelihoods,~\citet{ivanova2021implicit} develop the iDAD algorithm, which utilizes an optimization objective for implicit models; however, the method requires a differentiable simulator. %

\para{RL in BOED} This work leverages RL to perform policy learning in sequential BOED. This contrasts with prior policy-based BOED methods, which utilize dynamic programming~\cite{huan2016sequential} and gradient ascent~\cite{foster2021deep, ivanova2021implicit} rather than RL. ~\citet{shen2021bayesian, blau2022optimizing, ashenafi2021reinforcement} consider RL-based sequential BOED, but assume closed-form likelihoods; furthermore, the method in~\citet{ashenafi2021reinforcement} is restricted to the Bayesian function optimization setting~\cite{snoek2012practical}, rather than general BOED.~\citet{ivanova2021implicit} introduce a policy-based method for implicit models that requires a differentiable simulator. In contrast, our RL-based method applies to the general BOED setting and does not assume access to either a closed-form likelihood or a differentiable simulator.

\section{Preliminaries}\label{sec:background}

We consider the experiment design problem setting, in which a learning agent seeks to maximize its knowledge about a set of initially-unknown model parameters. The agent sequentially performs experiments and observes their outcomes to learn about the model parameters. The agent aims to gain maximal information by making optimal choices about which experiments to perform.

\subsection{The Sequential BOED Problem Statement}\label{ssec:background-boed}

This work considers sequential BOED, in which a learning agent seeks to learn about an unknown set of model parameters $\theta \in \Theta$, where $\Theta$ is the space of possible model parameter values. The agent sequentially performs experiments, where each experiment is defined by an \textit{experiment design} that specifies the parameters of that experiment. On each iteration $t \in \{1, \ldots, T\}$, for a fixed, given time horizon $T$, the agent selects an experiment design $\xi_t \in \Xi$ and receives an observation $y_t \in Y$, where $\Xi$ and $Y$ are the design and observation spaces, respectively.

At time $t + 1$, the agent has access to the current history of design-observation pairs $h_t:=\{(\xi_1, y_1), \ldots, (\xi_t, y_t)\} \in \mathcal{H}^t$, where $\mathcal{H}^t = (\Xi \times Y)^t$ is the space of possible length-$t$ histories. Thus, the agent can utilize its current information about $\theta$ to adaptively design the next experiment.

\para{Learning objective} The goal of this work is to select designs $\xi_1, \ldots, \xi_T$ that maximize the mutual information between the observations $y_1, \ldots, y_T$ and $\theta$:
\begin{equation}\label{eqn:IG_objective}
    I(\theta; y_1, \ldots y_T \mid \xi_1, \dots \xi_T).
\end{equation}
We further aim to select designs with high computational efficiency, such that our method that can quickly be deployed in live experiments.

\para{Assumptions} We make the following assumptions:
\begin{enumerate}
    \item The design, observation, and parameter spaces, $\Xi, Y$, and $\Theta$, are known.
    \item Relating $\xi_t$, $y_t$, and $\theta$, we assume access to a simulation model $\mathcal{M}$ that maps an experiment design, history, and set of parameters to an observation via a possibly-unknown likelihood: $y_t \sim p(y \mid \theta, \xi_t, h_{t - 1})$.
\end{enumerate}

This work makes fewer assumptions about the simulator than a number of prior works. Firstly, we do not assume simulator differentiability, as in~\citet{ivanova2021implicit}. Secondly, we do not assume that experiments must be conditionally independent, where conditionally independent experiments have likelihoods of the form, $y_t \sim p(y \mid \theta, \xi_t)$. Instead, we assume that the likelihood of observation $y_t$ is generated via $p(y \mid \xi_t, \theta, h_{t - 1})$, where the $h_{t - 1}$-dependence allows the system's underlying state to depend on the specific experiment sequence performed. Experiments can be conditionally-dependent in many real-world settings, for instance in robotics and biology.

\subsection{Background: Sequential BOED with design policies}

In settings with implicit models, in which the likelihood $p(y \mid \xi, \theta, h_{t - 1})$ is unknown, sequential BOED conventionally places a prior $p(\theta)$ over the unknown parameters $\theta$ and then alternates between two steps: 1) estimating the intermediate posterior $p(\theta \mid h_{t-1})$ via likelihood-free inference,  and 2) maximizing the expected marginal information gain with respect to $\theta$, defined as the expected reduction in entropy from the current intermediate posterior to the next intermediate posterior:
\begin{flalign}
        &I_{h_{t-1}}(\xi_t) := I(y_t; \theta \mid h_{t - 1}, \xi_t) \label{eqn:information-gain} \\&\hspace{3mm}= \ent[p(\theta\mid h_{t-1})]
        -\mathbb{E}_{p(y_t\mid\xi_t)}\ent[p(\theta\mid h_{t-1} \cup \{\xi_t, y_t\})]]. \nonumber
\end{flalign}

In this setting, we define $h_0:=\emptyset$, such that $p(\theta|h_0)=p(\theta)$. This process is myopic, in that it maximizes the single-step EIG without consideration for future steps, which can lead to suboptimal design sequences \cite{pmlr-v119-jiang20b}. 

Notably, the mutual information objective in Equation~\eqref{eqn:IG_objective} can be decomposed as the sum of expected marginal information gains, as shown in Appendix~\ref{app:EIG_decomp}: %
\begin{equation}\label{eqn:sum_marginals}
    I(\theta; y_1, \ldots, y_T \mid \xi_1, \ldots, \xi_T) = \sum_{t = 1}^T I_{h_{t - 1}}(\xi_t).
\end{equation}

Meanwhile, recent works in sequential BOED for models with tractable likelihoods~\cite{foster2021deep} and for implicit models with differentiable simulators~\cite{ivanova2021implicit} instead learn a design policy $\pi_\phi$ with parameters $\phi$. At time $t$, this policy is a deterministic function of the history $h_{t-1}$ that outputs the next design $\xi_t$. In particular, \citet{foster2021deep} seek to identify the policy $\pi_\phi$ that maximizes the total expected information gain (EIG) for a policy:
$$\mathcal{I}_T(\pi_\phi) = \mathbb{E}_{p(\theta)p(h_T\mid\theta, \pi_\phi)}\left[\sum_{t=1}^T I_{h_{t-1}}(\xi_t)\right], \xi_t = \pi_\phi(h_{t-1}),$$
where the policy $\pi_\phi$ is deterministic. While our work instead considers stochastic policies, for which $\xi_t \sim \pi_\phi(h_{t-1})$, we adapt this result for stochastic policies in Appendix~\ref{app:total_EIG}.

\citet{foster2021deep} derive the sequential Prior Contrastive Estimation (sPCE) optimizable lower bound to $\mathcal{I}_T(\pi_\phi)$:
\begin{flalign}
    \mathcal{L}&_T^{\rm sPCE}(\pi_\phi, L)={}\label{eqn:spce}\\
        &\mathbb{E}_{p(\theta_{0:L})p(h_T|\theta_0, \pi_\phi)}\left[\log\frac{p(h_T|\theta_0, \pi_\phi)}{\frac{1}{L+1}\sum_{\ell=0}^L p(h_T|\theta_\ell, \pi_\phi)}\right]. \nonumber
\end{flalign}

However, utilizing (\ref{eqn:spce}) to optimize the objective $\mathcal{I}_T(\pi_\phi)$ requires a tractable likelihood. Thus, more recently,~\citet{ivanova2021implicit} extend the sPCE bound to the InfoNCE bound for implicit models by using a jointly optimized auxiliary critic $U:\mathcal{H}^T \times \Theta \mapsto \mathbb{R}$ that approximates the log-likelihood. The InfoNCE bound is defined by:
\begin{flalign}
    \mathcal{L}&_T^{\rm NCE}(\pi_\phi, U; L)={}\label{eqn:nce}\\
        &\mathbb{E}_{p(\theta_{0:L})p(h_T|\theta_0, \pi_\phi)}\left[\log\frac{\exp{U(h_T, \theta_0)}}{\frac{1}{L+1}\sum_{\ell=0}^L \exp{U(h_T, \theta_\ell)}}\right]. \nonumber
\end{flalign}

Notably, the optimal critic (which assumes $U$ has infinite approximation capacity) is $U^*_{\rm NCE}=\log p(h_T|\theta_0, \pi_\phi) + c(h_T)$, where $c(h_T)$ is any function depending only on the history $h_T$~\cite{ivanova2021implicit}. With an optimal critic $U^*$, we recover the sPCE bound from the InfoNCE bound: $\mathcal{L}_T^{\rm NCE}(\pi_\phi, U^*_{\rm NCE}; L)=\mathcal{L}_T^{\rm sPCE}(\pi_\phi, L)$. With a learned critic $U$, we are able to obtain a point-wise posterior estimator via self-normalization~\cite{ivanova2021implicit}.

During training of the policy network $\pi_\phi$, the policy parameters $\phi$ can be optimized by maximizing either of the lower bounds in Equations~\eqref{eqn:spce} and \eqref{eqn:nce} via stochastic gradient ascent on simulated histories. However, each lower bound has a restrictive set of assumptions: the sPCE bound requires models with a closed-form likelihood density $p(h_T\mid\theta, \pi)$, while the InfoNCE bound requires that the simulation model $y \sim p(y\mid\theta, \xi)$ be differentiable. We present a method that relaxes these assumptions to the nondifferentiable implicit case, where the likelihood density is unknown or intractable and the simulation model is nondifferentiable. 

\subsection{Background: Reinforcement Learning}\label{ssec:background-rl}
We consider the standard reinforcement learning (RL) paradigm formulated as a episodic %
Partially-Observed Markov Decision Process (POMDP), specified by the tuple $(\mcS, \mathcal{O}, \mcA,  \mcP, \mathcal{P}_e, r, \gamma, T)$ with states $s \in \mcS$, observations $o \in \mathcal{O}$, actions $a \in \mcA$, transition dynamics $\mcP: \mcS\times\mcA\times\mcS\mapsto [0, 1]$, observation emission probabilities $\mathcal{P}_e: \mathcal{S} \times \mathcal{O} \mapsto [0, 1]$, reward function $r:\mcS\times\mcA\mapsto\mathbb{R}$, discount factor $\gamma$, and fixed time horizon $T$. Over a series of discrete timesteps $t$, the agent receives an observation $o$, selects an action $a$, and receives a reward $r$ and next observation $o'$. The agent interacts with the environment through trajectories of the form $\tau = \{s_0, a_0, r_1, s_1, \ldots, s_{T - 1}, a_{T - 1}, s_T, r_T\}$, where the true states $s_i$ are not observed by the learning agent. The cumulative reward is the discounted sum of rewards $R=\sum_{i=0}^T \gamma^{i}r(s_i, a_i)$ with discount factor $\gamma$ over the time horizon $T$. The goal of the agent is to find the optimal parameters $\phi$ of a policy $\pi_\phi:\mathcal{O}\mapsto\mcA$ that maximize the expected return:
$$\phi^*=\argmax_\phi\mathbb{E}_{p(\tau|\pi_\phi)}\left[\sum_{i=0}^{T - 1} \gamma^{i}r(s_i, a_i)\right],$$
where $p(\tau|\pi_\phi)$ is the trajectory distribution of the policy.

This work leverages TD3 \cite{fujimoto2018addressing}, a deep off-policy policy gradient algorithm that concurrently learns a Q-function via Bellman backups and leverages it to learn a policy. Both the policy and Q-function are parameterized by neural networks. %
During training, zero-mean Gaussian noise is added to the policy outputs to improve exploration.

\section{The \rloed Algorithm}\label{sec:rloed}
The key idea of \rloed is to cast the conventional BOED framework as an MDP, in which the RL agent learns a policy that maps the observed history  of experiment designs and outcomes to the next design. First, we define a naive sparse reward formulation, in which the agent receives zero reward until the final timestep of each length-$T$ trajectory of designs and experiment outcomes; then, in the last time-step, the agent is rewarded based on the total information gained about $\theta$ during the interaction trajectory. As RL algorithms are often inefficient with sparse reward signals ~\cite{pathak2017curiosity,NIPS2017_453fadbd}, we subsequently extend this reward definition to obtain a dense reward signal in Section \ref{ssec:dense-reward}, which we show helps to accelerate learning. The pseudocode of \rloed is detailed in Algorithm~\ref{alg:rloed}.

\begin{algorithm}[tb]
   \caption{\rloed with dense rewards} \label{alg:rloed}
\begin{algorithmic}[1]
   \STATE {\bfseries Requires:} simulator $\mathcal{M}$, prior $(\theta)$, batch size $L$
   \STATE Initialize replay buffer $\mathcal{B} \leftarrow \emptyset$
   \STATE Initialize policy and critic parameters $\pi_\phi$ and $U_\psi$
   \STATE // Simulate history for policy and critic learning
   \REPEAT
   \STATE // Sample a batch of parameters from the prior
   \STATE $\theta_{0:L} \sim p(\theta)$
   \STATE Initialize empty histories $h^{0:L}_0 \leftarrow \emptyset$
   \FOR{$t=1$ {\bfseries to} $T$}
   \STATE // Get next action, simulate and add to the history
   \STATE $\xi_t^{0:L} \leftarrow \pi_\phi(h_{t-1}^{0:L})$
   \STATE $y_t^{0:L} \sim \mathcal{M}(\xi_t^{0:L})$
   \STATE $h_t^{0:L} \leftarrow h_{t-1}^{0:L} \cup \{\xi_t, y_t\}^{0:L}$
   
   \STATE // Calculate the dense reward
   \STATE $r_t^{0:L} \leftarrow g(h_t^{0:L}, U_\psi; L) - g(h_{t-1}^{0:L}, U_\psi; L)$
   \STATE // Store transitions in replay buffer
   \STATE $\mathcal{B} \leftarrow \mathcal{B} \cup \{(h_{t-1}^{0:L}, \xi_t^{0:L}, h_{t}^{0:L}, r_t^{0:L})\}$
   \ENDFOR
   \FOR{each gradient step}
   \STATE // Optimize policy via TD3
   \STATE $\phi \leftarrow \phi + \nabla J(\pi_\phi)$
   \ENDFOR
   \FOR{each gradient step}
   \STATE // Optimize critic using InfoNCE loss
   \STATE Sample minibatch ${(\tau_j)}_{j=0}^L$ from previous rollout
   \STATE Optimize $\mathcal{L}_T^{\rm NCE}$ with respect to $U_\psi$
   \ENDFOR
   \UNTIL{convergence}
\end{algorithmic}
\end{algorithm}

\subsection{Casting BOED as an MDP}\label{ssec:boed-mdp}
At each step $t$ of learning, the RL agent observes a history $h_t \in \mathcal{H}^t$, selects the next experiment design, and receives an updated history and a reward based on the InfoNCE objective. Below, we formalize the components of the POMDP.

\para{State and Observation Space} At each step $t$, the agent observes the history $h_t \in \mathcal{H}^t$ for $t \in \{0, \ldots, T\}$. Thus, the observation space $\mathcal{O}$ is given by the union of possible history spaces at all time-steps: $\mathcal{H}^0 \cup \mathcal{H}^1 \cup \ldots \mathcal{H}^T$. As the history is variable length, we consider several different history aggregation architectures in Section~\ref{ssec:architecture}, which encode the history into a fixed dimension latent vector. This fixed dimension vector functions as the state input to the policy $\pi(s)$ and Q-function $Q(s, a)$. Intuitively, the agent is learning to represent the observation $h_t$ as a current belief about the model posterior of $\theta_0$. However, we don't explicitly compute the model posterior over $\theta_0$, as this is computationally intractable for implicit models.

We define the state space as a tuple $s_t:=(h_t, \theta_{0:L})$. This consists of a ground truth parameter $\theta_0$, which the agent is trying to learn; the current experiment history $h_t$;
and $L$ contrastive samples of the parameters $\theta_0$, $\theta_{1:L}$. In addition to the experiment history $h_t$, the state includes $\theta_{0:L}$ because the reward signal, defined below, depends on the $\theta_{0:L}$ variables, which differ between trajectories; in fact, the ground truth $\theta_0$ and contrastive samples $\theta_{1:L}$ are independently sampled from the prior $p(\theta)$ and fixed during each trajectory. 
However, the RL agent does not observe any of the parameter vectors $\theta_{0:L}$. %

\para{Action Space} Because sequential BOED designs experiments adaptively based on previous history, we correspondingly seek a policy $\pi_\phi$ that maps a representation of the experiment history to the next experiment $\xi$. The policy outputs the next experiment design, and so the action space is given by the known space of possible designs $\Xi$. 

\para{Transition and Observation Emission Probabilities} As the parameter samples $\theta_{0:L}$ are fixed throughout within a length-$T$ trajectory, the transition probabilities depend only on the probability of moving from $h_t$ to $h_{t+1}$ given an action $\xi_{t+1}$. As the next history $h_{t+1} := h_t \cup \{(\xi_{t+1}, y_{t+1})\}$ includes the prior history, the transition probabilities $\mcP$ only depend on the probability of observing $y_{t+1}$ under the ground truth $\theta_0$ and design $\xi_{t + 1}$, and are thus given by: $p(s_{t+1}\mid s_t, a_t)=p(y_{t+1}\mid \theta_0, \xi_{t+1}, h_t)$. %

Since the agent observes the history, which is fully defined by the state, the observation emission probabilities $\mathcal{P}_e$ for a given state $s_t$ are given by: $p(o_t\mid s=s_t) = \mathbbm{1}_{[o_t = h_t]}$, where $\mathbbm{1}$ is the indicator function. We prove in Appendix~\ref{app:markov} that these transition dynamics---as well as the subsequently-defined rewards---are Markovian.

\para{Rewards} %
We first define a sparse reward to incentivize the RL agent to seek experiments that produce a high information gain between the observations and parameters $\theta_0$; this sparse reward motivates our dense reward formulation in Section~\ref{ssec:dense-reward}. This sparse reward approximates the total information gain at the end of each trajectory, while assigning zero reward during intermediate steps within trajectories. To approximate a trajectory's information gain, we first lower-bound $\mathcal{I}_T(\pi_{\phi})$ by the sPCE bound in Equation~\eqref{eqn:spce}. Given a likelihood and samples of $\theta_{0:L}$ and $h_T$ given $\pi_\phi$:
\begin{flalign}
    r(s_T, a_T \mid \pi_\phi) &= r(h_T, \theta_{0:L}, \xi_{T + 1} \mid \pi_\phi) \nonumber \\ &= \log \left[\frac{p(h_T \mid \theta_0, \pi_\phi)}{\frac{1}{L + 1}\sum_{l = 0}^L p(h_T \mid \theta_l, \pi_\phi)} \right]. \label{eqn:sPCE_reward}
\end{flalign}

Yet, we cannot directly estimate this sPCE-based reward, as it requires a tractable likelihood. Thus, we further approximate the sPCE bound via the InfoNCE loss \cite{poole2019variational, oord2018representation} to define our RL reward signal.
For convenience, we define the following function $g$, which takes as input a history $h_t$, a set of contrastive samples $\theta_{1:L}$, and a learned critic $U_\psi: \left(\cup_{t = 0}^T\mathcal{H}^t\right) \times \Theta$ with parameters $\psi$, which approximates the unknown log-likelihood:
\begin{equation}
    g(h_t, U_\psi; L) = \log\left[\frac{\exp(U_\psi(h_t, \theta_0))}{\frac{1}{L+1}\sum_{\ell=0}^{L}\exp(U_\psi(h_t, \theta_\ell))}\right],
\end{equation}
where $g(h_0, U_\psi; L):=0$ by definition. Approximating the reward in~\eqref{eqn:sPCE_reward} via $g$, we provide the following reward signal to the RL agent at timestep $T$, given specific samples of $\theta_{0:L}$ and $h_T$ and the critic $U_\psi$:
\begin{equation}
    r(s_T, a_T) \approx g(h_T, U_\psi; L).
\end{equation}
Meanwhile, the RL agent receives zero reward for $t < T$.

By extending results from~\citet{ivanova2021implicit} to stochastic design policies $\pi_\phi$ (see Appendix~\ref{app:total_EIG}-\ref{app:NCE}), we note that $\mathbb{E}_{p(\theta_{0:L})p(h_T \mid \theta_0, \pi_\phi)}[g(h_T, U_\psi; L))] \leq I(\theta; h_T \mid \pi_\phi)$, i.e. in expectation, $g$ lower-bounds the mutual information between the trajectory history  and unknown parameters $\theta$ for a given policy $\pi_\phi$. %
We follow the history simulation process in \citet{foster2021deep}, which samples a ground truth model parameter $\theta_0$ from the prior $p(\theta)$ and iterates between querying the policy for the next design $\xi_t \sim \pi_\phi(h_{t-1})$ and simulating the chosen design $y_t = \mathcal{M}(\xi_t)$. 

By setting the RL discount factor to $\gamma=1$, the RL agent aims to maximize the following objective:
\begin{align*}
    J(\pi_\phi) &= \mathbb{E}_{p(\theta_{0:L})p(h_T\mid\theta_0, \pi_\phi)}\left[\sum_{i=0}^{T - 1} \gamma^{i}r(s_i, a_i)\right] \\
    &= \mathbb{E}_{p(\theta_{0:L})p(h_T\mid\theta_0, \pi_\phi)}\left[g(h_T, U_\psi; L)\right] \\
    &= \mathbb{E}_{p(\theta_{0:L})p(h_T\mid\theta_0, \pi_\phi)}\left[\frac{\exp(U_\psi(h_t, \theta_0))}{\frac{1}{L+1}\sum_{\ell=0}^{L}\exp(U_\psi(h_t, \theta_\ell))}\right].
\end{align*}

We see that under our POMDP construction, the RL agent is incentivized to optimize the objective in Equation~(\ref{eqn:nce}), as desired. The policy $\pi_\phi$ and auxiliary critic $U_\psi$ are jointly optimized via the InfoNCE loss for policy-based BOED (\ref{eqn:nce}). 

\subsection{Dense Reward Formulation}\label{ssec:dense-reward}

Since RL can struggle with sparse rewards~\cite{NIPS2017_453fadbd}, we additionally propose a dense reward formulation. While the sparse signal in Section~\ref{ssec:boed-mdp} assigns zero reward to intermediate timesteps ($t < T$) while assigning a reward of $g(h_T, U_\psi; L)$ in the final timestep, our proposed dense reward sets $r(h_{t-1}, \theta_{0:L}, \xi_t) := g(h_t, U_\psi; L)-g(h_{t-1}, U_\psi; L)$. Importantly, under the dense reward, the sum of rewards in a trajectory is:
\begin{equation*}
    \sum_{t=1}^T \left[g(h_t, U_\psi; L)-g(h_{t-1}, U_\psi; L)\right] = g(h_T, U_\psi; L),
\end{equation*}
where we collapse the telescoping sum and use that $g(h_0, U_\psi; L):=0$ by definition. Thus, the sparse and dense reward formulations assign each trajectory the same total reward; however, in Section \ref{sssec:ablation-sparse}, we note that the dense reward formulation provides significant empirical gains. Furthermore, we show in Appendix~\ref{app:dense_reward} that the dense rewards approximate the marginal expected information gains.

\subsection{Network Architectures}\label{ssec:architecture}

While the TD3 actor-critic architecture remains largely unchanged, both the policy network $\pi_{\phi}$ and Q-network $Q_{\omega}$ receive variable-length histories as inputs. Similarly, the critic $U_\psi$, which approximates the log-likelihood of a history, must also handle variable-length histories. 
 
Inspired by \citet{ivanova2021implicit}, the critic network $U_\psi$ has two inputs: a parameter vector $\theta$ and a history $h_t$. However, in contrast to \citet{ivanova2021implicit}, these inputted histories may be variable-length, rather than exclusively length-$T$. Despite this, we suggest following the same architecture in~\citet{ivanova2021implicit}: encode the history $h_t$ and parameter $\theta$ via two separate encoder networks $E_{\psi_h}(h_t)$ and $E_{\psi_\theta}(\theta)$, respectively, and then take the inner product of the two encodings, $E_{\psi_h}(h_t)^TE_{\psi_\theta}(\theta)$, to obtain a scalar output. %
The parameter encoding network $E_{\psi_\theta}$ is a simple feed-forward neural network, while the history encoding network depends on whether the experiments are conditionally independent. As in \citet{ivanova2021implicit}, we use a permutation-invariant sum-pooling architecture with an attention module for environments in which conditional independence does hold, while using an LSTM architecture otherwise.

The RL agent's policy network $\pi_{\phi}$ and Q-network $Q_{\omega}$ must also both take as input a variable-length history $h_t$. In contrast to \citet{ivanova2021implicit}---which suggests encoding the history in the same manner as the critic $U_\psi$'s history encoder $E_{\psi_h}(h_t)$---we suggest a simple concatenation architecture with zero-padding for the RL policy and Q networks. Empirically, we find that a simple concatenation architecture significantly outperforms a more complicated sequence-encoding architecture.

\section{Experiments}\label{experiments}

We evaluate \rloed on three experimental design environments, including learning model parameters in a location finding task (Section~\ref{ssec:locfin}), an epidemiology model (Section~\ref{ssec:sir}), and the cartpole environment (Section \ref{ssec:cartpole})

\para{Baselines} We compare \rloed to several baselines. In particular, we mainly consider baselines that do not require significant online computation, as this work focuses on efficient online deployment; notably, we do not consider~\citet{kleinegesse2021sequential}, as it performs online information gain estimation with heavy online computation. For all environments, we consider 1) a random baseline and 2) the MINEBED~\cite{pmlr-v119-kleinegesse20a} method using Bayesian Optimization (labeled MINEBED-BO),
a static BOED baseline applicable to non-differentiable implicit models. For applicable environments, we additionally include 3) iDAD~\cite{ivanova2021implicit} trained with the InfoNCE bound and 4) DAD~\cite{foster2021deep}. iDAD and DAD are not direct competitors to \rloed, they respectively require a differentiable simulator and analytic likelihoods. We nevertheless include them to measure the performance gap when such knowledge is available.

\para{Performance metrics} For the Location Finding task (Section~\ref{ssec:locfin}), we use the sequential Nested Monte Carlo upper bound on the total EIG given in~\citet{foster2021deep} as a performance metric. For the SIR and cartpole environments, we use the InfoNCE bound given in \citet{ivanova2021implicit} as a comparison metric. 

\subsection{Location Finding}\label{ssec:locfin}

We consider the Location Finding task used previously in \citet{ivanova2021implicit, foster2021deep}. In this environment, the unknown parameters $\theta$ are the locations of multiple hidden sources in an arbitrary (but given) $N$-dimensional space; for instance, with $N=2$, this is the Cartesian coordinate plane. The sources emit a signal whose intensity is inversely proportional to the squared distance from the source.
With multiple sources, the total intensity is the superimposition (or sum) of the individual signals. Each experiments measures a noisy signal at any specified location within given bounds to deduce the locations of the sources. 

We consider two hidden sources, $N \in \{2, 5, 10, 15\}$, and $T=10$ experiments to evaluate \rloed in both in high dimensions and compared to methods with stricter assumptions (iDAD, DAD) or general baselines (random, MINEBED-BO). We use a standard normal prior, $\theta\sim\mathcal{N}(0, 1)$. Table~\ref{table:locfin} summarizes the results.

We observe that \rloed significantly outperforms MINEBED-BO and the random baseline. While in lower dimensions, \rloed does not outperform DAD and iDAD (which make stronger assumptions), in higher dimensions ($N=10, 15$), \rloed becomes competitive with iDAD and approaches the performance of DAD for $N=15$. MINEBED-BO appears to perform worse in higher dimensions, even compared to the random baseline, which we hypothesize is due to Bayesian Optimization being less effective in higher dimensions~\cite{kirschner2019adaptive}.

In Table~\ref{table:locfin}, we observe a measurable performance gap between iDAD/DAD and \rloed, which we hypothesize is a limitation of deep RL. Given that the sPCE bound in~\eqref{eqn:spce} is tractable in this environment, we evaluated the performance of a policy trained using analytic likelihoods. In Figure~\ref{fig:reward}, we observe that even with analytic likelihoods, the RL agent still does not match the performance of iDAD or DAD, suggesting that the RL algorithm is the main limiting factor. As such, we still suggest using iDAD for diffentiable models or DAD for analytic models whenever such assumptions hold. Section~\ref{ssec:cartpole} examines a case where they do not hold.

\begin{table}[t]
\caption{Location finding task results (Section~\ref{ssec:locfin}). Upper bound of the expected information gain (EIG) of five BOED methods, with four $N$-dim. spaces. EIG values are estimated by the sNMC~\cite{foster2021deep} upper bound with $L = 1\times10^5$ contrastive samples to ensure tightness of the bound. Methods above the horizontal line make stronger assumptions than \rloed.} \label{table:locfin}
\vskip -0.1in
\begin{center}
\begin{small}
\begin{tabular}{lcccc}
\toprule
Method & $N=2$ & $N=5$ & $N=10$ & $N=15$\\
\midrule
DAD     & 7.967 &  3.337 & 0.937 & 0.431\\
iDAD (InfoNCE)   &  5.637 & 5.637 & 0.744 & 0.325\\
\midrule
Random    & 4.862 &  1.899 & 0.570 & 0.221\\
MINEBED-BO & 5.105 &  2.139 & 0.194 & 0.109\\
\textbf{\rloed }   & \textbf{7.100} &  \textbf{2.455} & \textbf{0.766} & \textbf{0.407}\\
\bottomrule
\end{tabular}
\end{small}
\end{center}
\vskip -0.1in
\end{table}

\subsubsection{Ablation Study: Sparse Rewards}\label{sssec:ablation-sparse}

We investigate the impact of the reward formulation on RL training by comparing \rloed's performance and convergence speed under both the dense and sparse rewards. 
While agents should converge to the same optimal policy under each of these two rewards---as both reward signals have the same sum over each trajectory---we empirically observe that agents trained with the sparse reward often fail to converge entirely, suggesting that the dense reward signal significantly improves learning. Figure~\ref{fig:reward} shows the mutual information lower bound evaluation curves corresponding to the sparse and dense rewards. We observe that following initial random exploration, the sparse reward agent collapses while the dense reward agent quickly converges to be competitive with the agent trained on the sPCE-based reward.

\begin{figure}[t]
    \centering
    \includegraphics[width=\columnwidth]{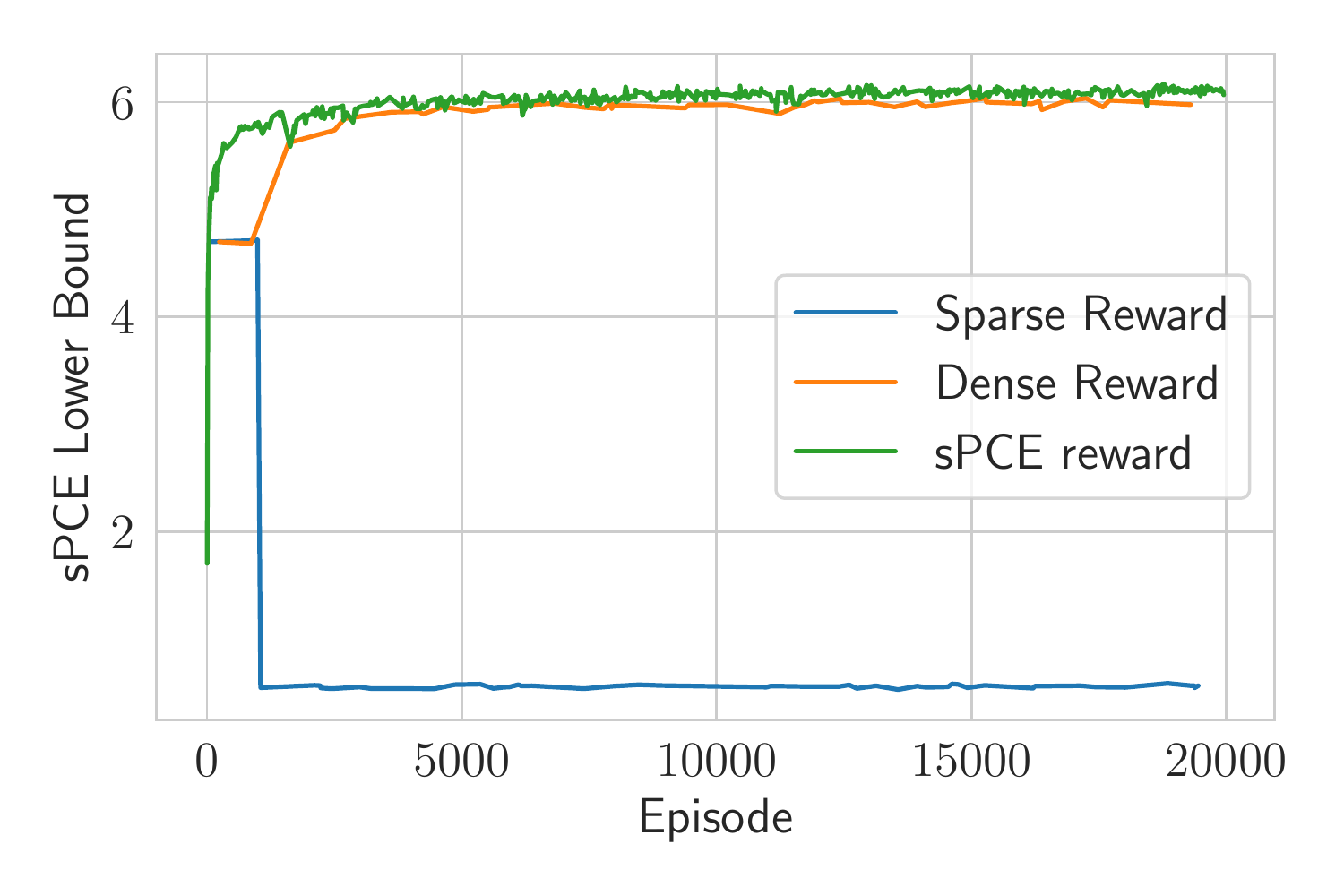}
    \vskip -0.2in
    \caption{Convergence of the sPCE lower bound with 4095 contrastive samples on the Location Finding task. The sparse reward formulation fails to learn, quickly collapsing after the initial random exploration phase. The dense reward quickly approaches the performance of the sPCE-based~\cite{foster2021deep} reward, despite the sPCE reward using analytic likelihoods.}
    \label{fig:reward}
    \vskip -0.1in
\end{figure}

\subsection{SIR Model}\label{ssec:sir}

We next consider the Susceptible-Infected-Recovered (SIR) model, a differentiable implicit model from epidemiology used previously in BOED \cite{ivanova2021implicit, kleinegesse2021gradient}. We follow the formulation from \citet{ivanova2021implicit}, who use a stochastic SIR formulation based on stochastic differential equations from \citet{kleinegesse2021gradient}. In the SIR model, there is a fixed-size population with three groups: susceptible, where individuals are infected according to a model parameter $\beta$; infected, where individuals recover according to a model parameter $\gamma$; and recovered. Thus, the unknown model parameters are $\theta:=[\beta, \gamma]$. The design space $\Xi$ consists of a time $\kappa$ at which to measure the number of infected individuals to estimate the unknown parameters $\theta$. Note that in this environment, experiments are not conditionally independent; thus, we use an LSTM-based history encoder.

\begin{table}[t]
\caption{SIR (Section~\ref{ssec:sir}) and Cartpole (Section~\ref{ssec:cartpole}) results: lower bound of the EIG of four BOED methods. EIG values are estimated by the InfoNCE bound~(\ref{eqn:nce}) with $1\times10^5$ contrastive samples to ensure tightness of the bound. However, as the InfoNCE bound relies on a learned critic, it may be biased and is not necessarily a guarantee of results. iDAD makes stronger assumptions (i.e., simulator differentiability) than \rloed.} \label{table:sir}
\vskip -0.1in
\begin{center}
\begin{small}
\begin{tabular}{lc|c}
\toprule
Method/Environment & SIR & Cartpole\\
\midrule
iDAD (InfoNCE)   & 3.843 & N/A\\
\midrule
Random    & 1.915 & 3.434\\
MINEBED-BO & 2.539 & 3.628\\
\textbf{\rloed}& \textbf{3.715} & \textbf{4.802}\\
\bottomrule
\end{tabular}
\end{small}
\end{center}
\vskip -0.2in
\end{table}

Table~\ref{table:sir} summarizes the results. We observe that \rloed outperforms both the random and MINEBED-BO baselines, and remains competitive with iDAD. We find that the posterior estimates are generally consistent with the ground truth. Figure~\ref{fig:sir_posterior} presents a sample posterior estimate with ground truth parameters $\theta=[0.924, 0.073]$. 

\begin{figure}[t]
    \centering
    \includegraphics[width=\columnwidth]{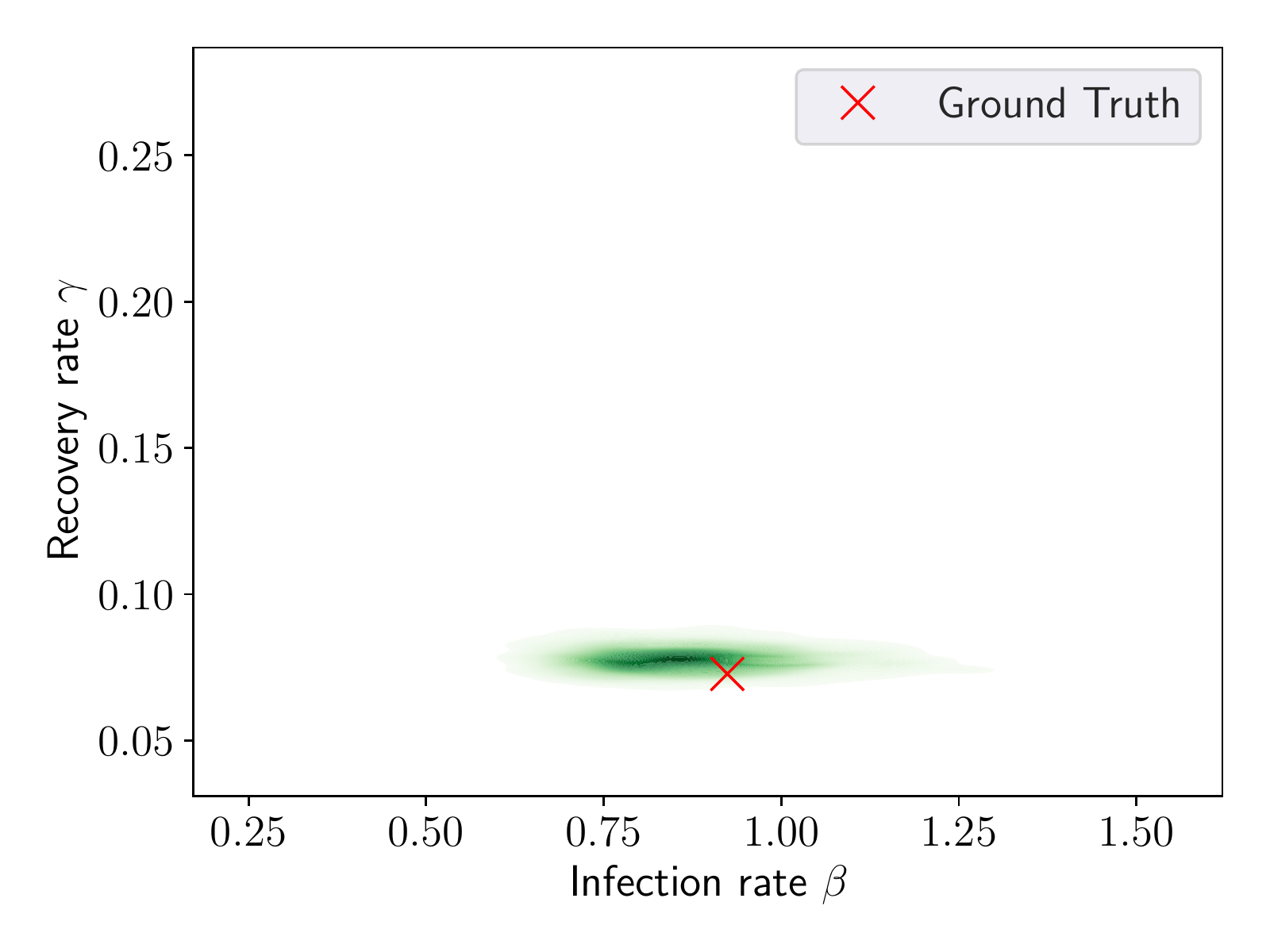}
    \vskip -0.2in
    \caption{Example posterior estimate using the critic $U_\psi$ on the SIR environment.}
    \label{fig:sir_posterior}
    \vskip -0.2in
\end{figure}

\subsection{Cartpole}\label{ssec:cartpole}

Finally, we consider variations of the standard cart pole environment~\cite{nagendra2017comparison}, as implemented by \citet{makoviychuk2021isaac} in NVIDIA Isaac Gym. The cartpole environment represents a nondifferentiable implicit model with limited observations and conditionally dependent experiments. The cartpole environment consists of a cart that slides horizontally on a bar, with a pole freely attached to it on one end. The unknown parameters $\theta=[\mu, m]$ consist of the (rotational) friction $\mu$ between the cart and pole and the pole's mass $m$. Each experiment applies an impulse to the cart every $1/6\,s=16.6$ ms. The agent does not observe the intermediate states of the cartpole between timesteps, and only observes the cartpole's state at the end of each $1/6\,s$ timestep. We set the maximum number of experiments to $T=5$ and the prior to be uniform over the bounds, i.e. $m \sim U[0.5, 1.5]$ and $\mu \sim U[0, 0.2]$. The observations consist of the position and velocity of the cart and pole's joint angle. 

Table~\ref{table:sir} presents results for the Cartpole environment. Note that iDAD cannot be used with this environment, as gradient paths are unavailable through the Isaac Gym simulator. Figure~\ref{fig:cartpole_posterior} shows a sample posterior estimate with ground truth $\theta=[0.037, 1.02]$. We find that posterior estimates are generally tight around the ground truth value, as represented by the shaded density around the ground truth marker.

\begin{figure}[t]
    \centering
    \includegraphics[width=\columnwidth]{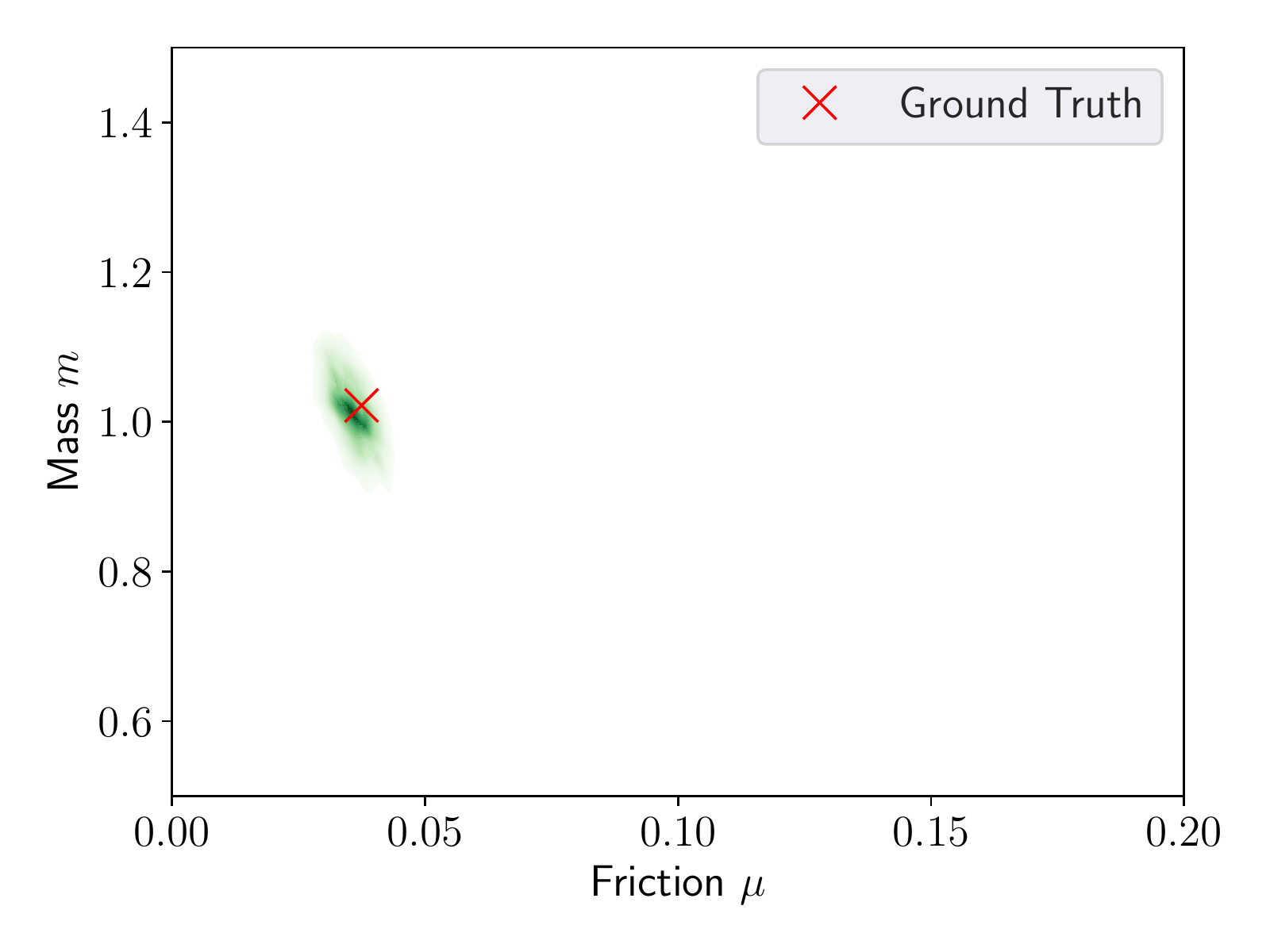}
    \vskip -0.2in
    \caption{Example posterior estimate using the critic $U_\psi$ on the Cartpole environment. }
    \label{fig:cartpole_posterior}
    \vskip -0.2in
\end{figure}

\section{Discussion}\label{ssec:discussion}

\para{Conclusion} This work proposes \rloed, a novel algorithm for Bayesian optimal experiment design (BOED), which leverages RL to train a design policy upfront, utilizing a simulator of the environment and a reward function based on the information gain. While prior work on policy-based BOED assumes a closed-form likelihood, conditionally-independent experiments, and/or a differentiable simulator, \rloed is a more general approach compatible with any black-box environment simulator. Simulation results suggest that \rloed yields promising performance on three benchmark tasks and is competitive with baselines.

\para{Limitations and future work} The primary limitation of this work is the initial training time required. As \rloed depends on deep RL, it also inherits its sample inefficiency and brittleness with respect to hyperparameters and implementation~\cite{engstrom2020implementation}. Future work will include applying \rloed to real-world tasks such as robot manipulation and Real2Sim transfer~\cite{lim2021planar}. We also hope to address the high computational cost during training and reduce the required number of simulated samples.

\bibliography{example_paper}
\bibliographystyle{icml2022}

\newpage

\appendix

\onecolumn
\section{Appendix: Mathematical Details}

\subsection{Decomposing the Total Expected Information Gain as a Sum of Marginals}\label{app:EIG_decomp}

In this section, we show Equation~\eqref{eqn:sum_marginals}:
\begin{equation*}
    I(\theta; y_1, \ldots, y_T \mid \xi_1, \ldots, \xi_T) = \sum_{t = 1}^T I_{h_{t - 1}}(\xi_t).
\end{equation*}

This holds via the chain rule for mutual information~\cite{cover2012elements}, since:
\begin{flalign*}
    I(\theta; y_1, \ldots, y_T \mid \xi_1, \ldots, \xi_T) &\overset{(a)}= \sum_{t = 1}^T I(\theta; y_t \mid \xi_1, \ldots, \xi_T, y_1, \ldots, y_{t - 1}) \\ &\overset{(b)}= \sum_{t = 1}^T I(\theta; y_t \mid \xi_1, \ldots, \xi_{t - 1}, \xi_t, y_1, \ldots, y_{t - 1})  \\ &= \sum_{t = 1}^T I(\theta; y_t \mid h_{t - 1}, \xi_t) \overset{(c)}= \sum_{t = 1}^T I_{h_{t - 1}}(\xi_t),
\end{flalign*}
where (a) invokes the chain rule for mutual information, (b) utilizes that $\theta$ and $y_t$ do not depend on $\xi_i, i > t$, and (c) applies the definition of $I_{h_{t - 1}}(\xi_t)$.

\subsection{Expected Total Information Gain for a Policy}\label{app:total_EIG}

Recall that we define the expected total information gain of a policy $\pi$ as follows:
\begin{equation*}
    \mathcal{I}_T(\pi) = \mathbb{E}_{p(\theta)p(h_T\mid\theta, \pi)}\left[\sum_{t=1}^T I_{h_{t-1}}(\xi_t)\right], \xi_t \sim \pi(h_{t-1}).
\end{equation*}

In this section, we adapt Proposition 1 from~\citet{ivanova2021implicit} for stochastic policies $\pi$. The updated result is stated and proven below:

\begin{proposition}[Total expected information gain of a policy]\label{prop:total_EIG}
Under the data generating distribution $p(h_T \mid \theta, \pi) = \prod_{t = 1:T} p(y_t \mid \theta, \xi_t, h_{t - 1})\pi(\xi_t \mid h_{t - 1})$, where $\xi_t \sim \pi(h_{t - 1})$ are the designs generated by the policy $\pi$, we can write $\mathcal{I}_T(\pi)$ as follows:
\begin{flalign}
    \mathcal{I}_T(\pi) &= \mathbb{E}_{p(\theta)p(h_T \mid \theta, \pi)}\left[\log p(h_T \mid \theta, \pi) \right] - \mathbb{E}_{p(h_T \mid \pi)}\left[\log p(h_T \mid \pi) \right] \label{prop1_line1} \\ &= I(\theta; h_T \mid \pi). \label{prop1_line2}
\end{flalign}
\end{proposition}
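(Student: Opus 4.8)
The plan is to establish the two displayed equalities \eqref{prop1_line1} and \eqref{prop1_line2} in turn, mirroring the structure of Proposition~1 in \citet{ivanova2021implicit} but carrying along the extra policy factor $\pi(\xi_t \mid h_{t-1})$ throughout. First I would start from the decomposition of $\mathcal{I}_T(\pi)$ as a sum of marginal information gains, which by Appendix~\ref{app:EIG_decomp} (Equation~\eqref{eqn:sum_marginals}) and the definition of $\mathcal{I}_T(\pi)$ equals $\mathbb{E}_{p(\theta)p(h_T\mid\theta,\pi)}\left[\sum_{t=1}^T I_{h_{t-1}}(\xi_t)\right]$. Expanding each $I_{h_{t-1}}(\xi_t) = I(\theta; y_t \mid h_{t-1},\xi_t)$ as $\mathbb{E}[\log p(y_t \mid \theta, \xi_t, h_{t-1})] - \mathbb{E}[\log p(y_t \mid \xi_t, h_{t-1})]$ under the appropriate conditionals, I would sum over $t$ and recognize the telescoping product structure of $p(h_T \mid \theta, \pi) = \prod_{t=1}^T p(y_t \mid \theta,\xi_t,h_{t-1})\pi(\xi_t\mid h_{t-1})$.

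The key observation that makes the policy factor drop out is that $\sum_t \log p(y_t\mid\theta,\xi_t,h_{t-1}) = \log p(h_T\mid\theta,\pi) - \sum_t \log\pi(\xi_t\mid h_{t-1})$, and similarly $\sum_t \log p(y_t\mid\xi_t,h_{t-1}) = \log p(h_T\mid\pi) - \sum_t\log\pi(\xi_t\mid h_{t-1})$, using that the design distribution $\pi(\xi_t\mid h_{t-1})$ is identical whether or not we condition on $\theta$ (the policy sees only the history). Hence the two $-\sum_t\log\pi(\xi_t\mid h_{t-1})$ terms cancel when we take the difference, leaving exactly $\mathbb{E}_{p(\theta)p(h_T\mid\theta,\pi)}[\log p(h_T\mid\theta,\pi)] - \mathbb{E}_{p(h_T\mid\pi)}[\log p(h_T\mid\pi)]$, which is \eqref{prop1_line1}. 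Here I would be careful that in the second expectation the marginalization over $\theta$ has already happened: $\mathbb{E}_{p(\theta)p(h_T\mid\theta,\pi)}[\log p(h_T\mid\pi)] = \mathbb{E}_{p(h_T\mid\pi)}[\log p(h_T\mid\pi)]$ since $\log p(h_T\mid\pi)$ does not depend on $\theta$.

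For \eqref{prop1_line2}, I would simply note that the right-hand side of \eqref{prop1_line1} is, by definition, $\mathbb{E}_{p(\theta,h_T\mid\pi)}\left[\log\frac{p(h_T\mid\theta,\pi)}{p(h_T\mid\pi)}\right]$, which is the standard expression for the mutual information $I(\theta; h_T\mid\pi)$ between $\theta$ and the full history $h_T$ under the policy-induced joint distribution $p(\theta,h_T\mid\pi) = p(\theta)p(h_T\mid\theta,\pi)$. This step is essentially a definition-check once the joint density is written out.

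The main obstacle I anticipate is purely bookkeeping rather than conceptual: keeping the conditioning sets straight across the telescoping sum, in particular verifying that the policy's design density $\pi(\xi_t\mid h_{t-1})$ really does appear with the same value in both the $\theta$-conditioned and $\theta$-marginal factorizations (so that it cancels), and handling the base case $h_0 = \emptyset$. An alternative, slicker route would bypass the marginal-gain decomposition entirely and instead apply the chain rule for mutual information directly to $I(\theta; \xi_1, y_1,\ldots,\xi_T,y_T \mid \pi)$, using that $I(\theta;\xi_t\mid h_{t-1},\pi) = 0$ because $\xi_t$ is conditionally independent of $\theta$ given $h_{t-1}$ under the policy; this makes the cancellation of the $\pi$ terms transparent and reduces the argument to a few lines.
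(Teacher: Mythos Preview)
Your proposal is correct and reaches the same conclusion, but it follows a genuinely different route than the paper's proof. The paper works in \emph{posterior} coordinates: it invokes the identity (Equations (15)--(20) in \citet{ivanova2021implicit}) that writes each marginal gain as an expected difference of posterior log-densities, $I_{h_{t-1}}(\xi_t) = \mathbb{E}\big[\mathbb{E}[\log p(\theta\mid h_t)] - \mathbb{E}[\log p(\theta\mid h_{t-1})]\big]$, telescopes in $t$ to obtain $\mathbb{E}[\log p(\theta\mid h_T) - \log p(\theta)]$, and only at the end applies Bayes' rule to convert this into the likelihood form $\mathbb{E}[\log p(h_T\mid\theta,\pi) - \log p(h_T\mid\pi)]$. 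Your argument instead works directly in \emph{likelihood} coordinates: you expand $I_{h_{t-1}}(\xi_t)$ as $\mathbb{E}[\log p(y_t\mid\theta,\xi_t,h_{t-1}) - \log p(y_t\mid\xi_t,h_{t-1})]$, sum, and read off the product factorizations of $p(h_T\mid\theta,\pi)$ and $p(h_T\mid\pi)$, with the policy factors $\log\pi(\xi_t\mid h_{t-1})$ cancelling because the design distribution does not see $\theta$. This is a bit more economical, since it avoids the detour through posterior entropies and the subsequent Bayes conversion; the one point worth making fully explicit is that the marginal history indeed factorizes as $p(h_T\mid\pi) = \prod_t \pi(\xi_t\mid h_{t-1})\,p(y_t\mid \xi_t, h_{t-1})$ via the chain rule, which is what justifies the second cancellation. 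Your suggested alternative---applying the chain rule for mutual information to $I(\theta; \xi_1,y_1,\ldots,\xi_T,y_T\mid\pi)$ and using $I(\theta;\xi_t\mid h_{t-1},\pi)=0$---is the cleanest of all three and makes the role of the stochastic policy most transparent.
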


\textit{\underline{Remark 1}}: Note that Proposition 1 in~\citet{ivanova2021implicit} proves an equivalent result for deterministic policies, for which $\xi_t = \pi(h_{t-1})$, while this work leverages RL algorithms that utilize stochastic policies, for which $\xi_t \sim \pi(h_{t-1})$; otherwise, the two results are entirely analogous. The following proof for stochastic policies is very similar to that of Proposition 1 in~\citet{ivanova2021implicit}, but changes in a few step. We reproduce the altered proof below, though citing a result from~\citet{ivanova2021implicit} for a proof segment that remains unchanged.

\textit{\underline{Remark 2}}: While~\citet{ivanova2021implicit} mentions that $\mathcal{I}_T(\pi)$ is not a true information gain because the designs are deterministic rather than random variables, in our case $\mathcal{I}_T(\pi)$ can indeed be interpreted as a mutual information, as shown below.

\begin{proof}

Observe that:
\begin{flalign}
    \mathcal{I}_T(\pi) :&= \mathbb{E}_{p(\theta)p(h_T \mid \theta, \pi)}\left[\sum_{t = 1}^T I_{h_{t - 1}}(\xi_t) \right] \overset{(a)}= \sum_{t = 1}^T\mathbb{E}_{p(\theta)p(h_T \mid \theta, \pi)}\left[I_{h_{t - 1}}(\xi_t) \right] \nonumber \\ &\overset{(b)}= \sum_{t = 1}^T\mathbb{E}_{p(\theta)p(h_{t - 1}, \xi_t \mid \theta, \pi)}\left[I_{h_{t - 1}}(\xi_t) \right] = \sum_{t = 1}^T\mathbb{E}_{p(\theta)p(h_{t - 1} \mid \theta, \pi)p(\xi_t \mid h_{t - 1}, \pi})\left[I_{h_{t - 1}}(\xi_t) \right] \nonumber \\ &\overset{(c)}= \sum_{t = 1}^T\mathbb{E}_{p(h_{t - 1} \mid \pi)p(\theta \mid h_{t - 1})p(\xi_t \mid h_{t - 1}, \pi})\left[I_{h_{t - 1}}(\xi_t) \right], \label{eqn:ivanovna-14}
\end{flalign}
where (a) applies linearity of expectation, (b) uses that $I_{h_{t - 1}}(\xi_t)$ does not depend on $x_i$ for $i > t$ or on $y_i$ for $i > t - 1$, and (c) applies Bayes rule.

Next, we apply a result shown in Equations (15)-(20) in \citet{ivanova2021implicit}. The proof of this identity is unchanged from~\citet{ivanova2021implicit}, since it is only concerned with the expression $I_{h_{t - 1}}(\xi_t)$, which is conditioned on a fixed $\xi_t$ and therefore is unaffected by the $\xi_t$-generating process:
\begin{equation}\label{eqn:ivanovna-15-20}
    I_{h_{t - 1}}(\xi_t) = \mathbb{E}_{p(y_t \mid \xi_t, h_{t - 1})}\left[\mathbb{E}_{p(\theta \mid h_t)}[\log p(\theta \mid h_t)] - \mathbb{E}_{p(\theta \mid h_{t - 1})}[\log p(\theta \mid h_{t - 1})] \right].
\end{equation}

Substituting Equation~\eqref{eqn:ivanovna-15-20} into \eqref{eqn:ivanovna-14} yields:
\begin{flalign*}
    \mathcal{I}_T(\pi) &= \sum_{t = 1}^T\mathbb{E}_{p(h_{t - 1} \mid \pi)p(y_t \mid \xi_t, h_{t - 1})p(\xi_t \mid h_{t - 1}, \pi)}\left[\mathbb{E}_{p(\theta \mid h_t)}[\log p(\theta \mid h_t)] - \mathbb{E}_{p(\theta \mid h_{t - 1})}[\log p(\theta \mid h_{t - 1})]\right].
\end{flalign*}

Using that:
\begin{flalign*}
p(h_t \mid \pi) &= p(h_{t - 1}, \xi_t, y_t \mid \pi) = p(h_{t - 1} \mid \pi)p(\xi_t \mid \pi, h_{t - 1})p(y_t \mid \pi, h_{t - 1}, \xi_t) \\ &= p(h_{t - 1} \mid \pi)p(y_t \mid \xi_t, h_{t - 1})p(\xi_t \mid h_{t - 1}, \pi),
\end{flalign*}
we obtain:
\begin{flalign*}
    \mathcal{I}_T(\pi)&= \sum_{t = 1}^T\mathbb{E}_{p(h_t \mid \pi)}\left[\mathbb{E}_{p(\theta \mid h_t)}[\log p(\theta \mid h_t)] - \mathbb{E}_{p(\theta \mid h_{t - 1})}[\log p(\theta \mid h_{t - 1})]\right] \\ &\overset{(a)}= \mathbb{E}_{p(h_T \mid \pi)} \sum_{t = 1}^T \left[\mathbb{E}_{p(\theta \mid h_t)}[\log p(\theta \mid h_t)] - \mathbb{E}_{p(\theta \mid h_{t - 1})}[\log p(\theta \mid h_{t - 1})]\right] \\ &\overset{(b)}= \mathbb{E}_{p(h_T \mid \pi)} \left[\mathbb{E}_{p(\theta \mid h_T)}[\log p(\theta \mid h_T)] - \mathbb{E}_{p(\theta)}[\log p(\theta)]\right],
\end{flalign*}
where (a) utilizes that $p(\theta \mid h_t)$ does not depend on $\xi_i$ or $y_i$ for $i > t$, and (b) collapses a telescoping sum. Finally, applying Bayes rule:
\begin{flalign*}
    \mathcal{I}_T(\pi) &= \mathbb{E}_{p(h_T \mid \pi)p(\theta \mid h_T)} \left[\log p(\theta \mid h_T)] - \mathbb{E}_{p(\theta)}[\log p(\theta)]\right] \\ &= \mathbb{E}_{p(h_T \mid \pi)p(\theta \mid h_T)} \left[\log p(\theta \mid h_T)] - \log p(\theta)\right] \\ &= \mathbb{E}_{p(h_T, \theta \mid \pi)} \left[\log p(\theta \mid h_T)] - \log p(\theta)\right] \\ &= \mathbb{E}_{p(\theta)p(h_T \mid \theta, \pi)} \left[\log p(h_T \mid \theta, \pi)] - \log p(h_T \mid \pi)\right],
\end{flalign*}
proving~\eqref{prop1_line1}.

To show~\eqref{prop1_line2}, we apply the definition of mutual information, followed by Bayes rule:
\begin{flalign*}
    I(\theta; h_T \mid \pi) &= \mathbb{E}_{p(\theta; h_T \mid \pi)}\left[\log\frac{p(\theta; h_T \mid \pi)}{p(\theta)p(h_T \mid \pi)} \right] = \mathbb{E}_{p(\theta)p(h_T \mid \theta, \pi)}\left[\log\frac{p(\theta)p(h_T \mid \theta, \pi)}{p(\theta)p(h_T \mid \pi)} \right] \\ &= \mathbb{E}_{p(\theta)p(h_T \mid \theta, \pi)}\left[\log\frac{p(h_T \mid \theta, \pi)}{p(h_T \mid \pi)} \right] = \mathbb{E}_{p(\theta)p(h_T \mid \theta, \pi)}\left[\log p(h_T \mid \theta, \pi) - \log p(h_T \mid \pi) \right].
\end{flalign*}

\end{proof}

\subsection{InfoNCE Bound for Stochastic Design Policies}\label{app:NCE}

\citet{ivanova2021implicit} shows (in Proposition 3 therein) that the InfoNCE bound lower-bounds the total information gain $\mathcal{I}_T(\pi)$ for deterministic policies. Furthermore,~\citet{ivanova2021implicit} shows that the optimal critic (with arbitrary approximating capacity) recovers the sPCE lower bound to the information gain, which is applicable with closed-form likelihoods, and that the inequality is tight as the number of contrastive samples $L$ approaches $\infty$.

We restate Proposition 3 from~\citet{ivanova2021implicit} below:
\begin{proposition}[InfoNCE bound for implicit policy-based BOED]\label{prop:NCE}
    Let $\theta_{1:L} \sim p(\theta_{1:L}) = \prod_i p(\theta_i)$, where $L \ge 1$, be a set of contrastive samples. For design policy $\pi$ and critic function $U: \mathcal{H}^T \times \Theta \to \mathbb{R}$, let:
    \begin{equation*}
        \mathcal{L}_T^{NCE}(\pi, U; L) := \mathbb{E}_{p(\theta_0)p(h_T \mid \theta_0, \pi)}\mathbb{E}_{p(\theta_{1:L})}\left[\log \frac{\exp(U(h_T, \theta_0))}{\frac{1}{L + 1}\sum_{i=0}^L \exp(U(h_T, \theta_i))} \right].
    \end{equation*}
    
    Then, $\mathcal{I}_T(\pi) \ge \mathcal{L}_T^{NCE}(\pi, U; L)$ for any $U$ and $L \ge 1$. Furthermore, the optimal critic $U^*_{NCE}(h_T, \theta) = \log p(h_T \mid \theta, \pi) + c(h_T)$, where $c(h_T)$ is any arbitrary function depending only on the history, recovers the sPCE information gain bound in~\citet{ivanova2021implicit}. The inequality is tight in the limit as $l \to \infty$ for this optimal critic.
\end{proposition}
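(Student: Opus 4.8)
This is the stochastic‑policy version of Proposition~3 of \citet{ivanova2021implicit}, and the plan is to lift that proof by observing that $p(h_T \mid \theta, \pi) = \prod_{t=1}^T p(y_t \mid \theta, \xi_t, h_{t-1})\,\pi(\xi_t \mid h_{t-1})$ is still a genuine conditional density of $h_T$ given $\theta$, and by using Proposition~\ref{prop:total_EIG} (which already covers stochastic policies) in place of the deterministic identity of \citet{foster2021deep} wherever the argument needs $\mathcal{I}_T(\pi) = I(\theta; h_T \mid \pi)$. The structure is a sandwich, $\mathcal{L}_T^{NCE}(\pi, U; L) \le \mathcal{L}_T^{\mathrm{sPCE}}(\pi, L) \le \mathcal{I}_T(\pi)$ for all $U$ and $L\ge 1$, with equality in the first step at the stated optimal critic and tightness of the whole chain as $L\to\infty$.

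\emph{Optimal critic and reduction to sPCE.} Fix $\pi$ and $L$ and rewrite every expectation via the equivalent ``symmetrized'' process: draw $\theta_{0:L}$ i.i.d.\ from $p(\theta)$, an index $J$ uniform on $\{0,\dots,L\}$, and $h_T \sim p(\cdot \mid \theta_J, \pi)$; by exchangeability of the $\theta_\ell$ this changes neither $\mathcal{L}_T^{NCE}$, $\mathcal{L}_T^{\mathrm{sPCE}}$, nor $I(\theta;h_T\mid\pi)$, and it makes the $h_T$‑marginal equal to $p(h_T\mid\pi)$. With $q_U(i \mid h_T, \theta_{0:L}) := \exp U(h_T,\theta_i)/\sum_j \exp U(h_T,\theta_j)$ one gets
\[
\mathcal{L}_T^{NCE}(\pi,U;L) = \log(L+1) + \mathbb{E}\big[\log q_U(J\mid h_T,\theta_{0:L})\big].
\]
Bayes' rule gives the true index posterior $p(J=i\mid h_T,\theta_{0:L}) = p(h_T\mid\theta_i,\pi)/\sum_j p(h_T\mid\theta_j,\pi)$, so by Gibbs' inequality this expectation is maximized over all $U$ exactly when $q_U$ matches that posterior, i.e.\ when $U(h_T,\theta) = \log p(h_T\mid\theta,\pi) + c(h_T)$ (an $h_T$‑only shift cancels inside $q_U$, hence is admissible). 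Substituting $U^*_{NCE}$ and undoing the symmetrization yields $\mathcal{L}_T^{NCE}(\pi,U^*_{NCE};L) = \mathcal{L}_T^{\mathrm{sPCE}}(\pi,L)$.

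\emph{sPCE bound and tightness.} By Proposition~\ref{prop:total_EIG}, $\mathcal{I}_T(\pi) = \mathbb{E}_{p(\theta_{0:L})p(h_T\mid\theta_0,\pi)}[\log(p(h_T\mid\theta_0,\pi)/p(h_T\mid\pi))]$ (appending dummy $\theta_{1:L}\sim p(\theta)$ is free), so cancelling terms against \eqref{eqn:spce} gives
\[
\mathcal{I}_T(\pi) - \mathcal{L}_T^{\mathrm{sPCE}}(\pi,L) = \mathbb{E}[\log A], \qquad A := \frac{\tfrac{1}{L+1}\sum_{\ell=0}^L p(h_T\mid\theta_\ell,\pi)}{p(h_T\mid\pi)}.
\]
Conditioning on $h_T$ under the symmetrized joint, the posterior of $\theta_{0:L}$ has density $\prod_\ell p(\theta_\ell)\cdot A$ (normalization $\mathbb{E}_{\prod_\ell p(\theta_\ell)}[A]=1$), so $\mathbb{E}[\log A\mid h_T] = \mathbb{E}_{\prod_\ell p(\theta_\ell)}[A\log A] = \mathrm{KL}\big(p(\theta_{0:L}\mid h_T,\pi)\,\big\|\,\textstyle\prod_\ell p(\theta_\ell)\big)\ge 0$ by convexity of $x\log x$; taking the expectation over $h_T$ proves $\mathcal{L}_T^{\mathrm{sPCE}}(\pi,L)\le\mathcal{I}_T(\pi)$, and with the previous step $\mathcal{L}_T^{NCE}(\pi,U;L)\le\mathcal{I}_T(\pi)$ for all $U$, $L\ge 1$. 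For tightness, the strong law of large numbers gives $\tfrac{1}{L+1}\sum_{\ell=0}^L p(h_T\mid\theta_\ell,\pi)\to p(h_T\mid\pi)$ a.s.\ as $L\to\infty$ (the $\ell=0$ term vanishes), so $\log A\to 0$; combined with the monotonicity of $\mathcal{L}_T^{\mathrm{sPCE}}(\pi,\cdot)$ in $L$ established by \citet{foster2021deep}, monotone convergence gives $\mathcal{L}_T^{\mathrm{sPCE}}(\pi,L)\uparrow\mathcal{I}_T(\pi)$, hence $\mathcal{L}_T^{NCE}(\pi,U^*_{NCE};L)\uparrow\mathcal{I}_T(\pi)$.

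The symmetrization bookkeeping, the Gibbs step, and the $x\log x$ Jensen step are all routine, so I do not expect a deep obstacle; the only points requiring genuine care are (a) checking that Proposition~\ref{prop:total_EIG} is invoked wherever the proof of \citet{ivanova2021implicit} used the deterministic‑policy identity, and (b) the $L\to\infty$ tightness claim, for which I would import the monotonicity and integrability facts about the sPCE bound from \citet{foster2021deep} rather than re‑derive them.
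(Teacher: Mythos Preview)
Your proposal is correct and matches the paper's approach exactly: the paper's entire ``proof'' is the single observation that \citet{ivanova2021implicit}'s argument for Proposition~3 only uses policy determinism through their Proposition~1, so substituting the stochastic-policy Proposition~\ref{prop:total_EIG} (your point (a)) suffices with no further change. You additionally spell out the details of the underlying sandwich $\mathcal{L}_T^{NCE}\le\mathcal{L}_T^{\mathrm{sPCE}}\le\mathcal{I}_T$ and the $L\to\infty$ tightness, which the paper simply delegates to \citet{ivanova2021implicit} and \citet{foster2021deep}; those details are sound.
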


While~\citet{ivanova2021implicit} assume deterministic policies, their proof of Proposition 3 only utilizes policy determinism by invoking Proposition 1 from~\citet{ivanova2021implicit}. Since we already showed how to adapt Proposition 1 from~\citet{ivanova2021implicit} for stochastic policies, Proposition 3 applies to stochastic (as well as deterministic) policies without any further adaptation needed.

\subsection{Showing the Markovian Property for the MDP Underlying the BOED POMDP}\label{app:markov}

Consider the POMDP defined for the BOED problem in Section~\ref{ssec:boed-mdp}. In this section, we show that conditioned on this POMDP's underlying states, the transition dynamics and rewards are both Markovian.

To be Markovian, the states, actions, and rewards of the underlying MDP must satisfy the following property~\cite{sutton2018reinforcement}:
\begin{equation}
    p(s_{t + 1}, r_{t + 1} \mid s_0, \ldots, s_t, a_0, \ldots, a_t) = p(s_{t + 1}, r_{t + 1} \mid s_t, a_t).
\end{equation}

This holds because the current state, $s_t = h_t$, includes full information about all previous states $h_i$ for $i < t$, as well as all actions $a_i = \xi_{i + 1}$ for $i < t$:
\begin{flalign*}
    p(s_{t + 1}, r_{t + 1} \mid s_0, \ldots, s_t, a_0, \ldots, a_t) &= p(h_{t+1}, \theta_{0:L}, r_{t + 1} \mid h_{0:t}, \xi_{1:t+1}, \theta_{0:L}) \\ &= p(h_{t+1}, \theta_{0:L}, r_{t + 1} \mid h_t, \xi_{t+1}, \theta_{0:L}) \\ &= p(s_{t + 1}, r_{t + 1} \mid s_t, a_t).
\end{flalign*}

\subsection{The Dense Reward Signal Approximates the Marginal Expected Information Gain Values}\label{app:dense_reward}

Recall that the dense reward is defined as follows:
\begin{equation}\label{eqn:dense}
    r(s_t, a_t) = g(h_{t + 1}, U_\psi; L) - g(h_t, U_\psi; L).
\end{equation}

Firstly, as noted in Section~\ref{ssec:dense-reward}, the sum of dense rewards over a given trajectory is equal to the total reward under the sparse reward formulation:
\begin{equation}\label{eqn:reward_sum}
    \sum_{t=0}^{T-1} r(s_t, a_t) = \sum_{t=1}^T \left[g(h_t, U_\psi; L)-g(h_{t-1}, U_\psi; L)\right] = g(h_T, U_\psi; L),
\end{equation}
where we collapse the telescoping sum and use that $g(h_0, U_\psi; L):=0$, which holds by definition. The value $g(h_T, U_\psi; L)$ is the reward assigned in the last step of the episode under the sparse reward.

Next, we aim to interpret the individual rewards $r(s_t, a_t)$. By Proposition~\ref{prop:total_EIG} and Proposition~\ref{prop:NCE}, we have the following relationship between the InfoNCE bound and the mutual information:
\begin{equation}\label{eqn:NCE_IG}
    \mathbb{E}_{p(\theta_{0:L})p(h_T \mid \theta_0, \pi)}[g(h_T, U_\psi; L)] = \mathcal{L}_T^{NCE}(\pi, U_\psi; L) \le \mathcal{I}_T(\pi) = I(\theta; h_T \mid \pi),
\end{equation}
where the inequality is tight as $U_\psi$ approaches the optimal critic and $L \to \infty$.

We argue that in the limit as $U_\psi$ approaches the optimal critic and $L \to \infty$, the dense rewards $r(s_t, a_t)$ in~\eqref{eqn:dense} approach the marginal information gains in expectation:
\begin{proposition}
    Let $r(s_t, a_t) := g(h_{t + 1}, U_\psi; L) - g(h_t, u_\psi; L)$. Then, in the limit as $U_\psi$ approaches the optimal critic and $L \to \infty$, $\mathbb{E}_{p(\theta_{0:L})p(h_{t+1} \mid \pi, \theta_0)}[r(s_t, a_t)] \to I(\theta; h_{t+1} \mid \pi) - I(\theta; h_t \mid \pi) = I(\theta; h_{t+1} \mid \pi, h_t)$, which is the marginal expected information gain from time $t$ to time $t+1$.
\end{proposition}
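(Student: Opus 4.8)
The plan is to leverage the telescoping structure already exposed in Equation~\eqref{eqn:reward_sum} together with the limiting identity~\eqref{eqn:NCE_IG}, applied not only at the terminal time $T$ but at each intermediate time $t$. The key observation is that Proposition~\ref{prop:total_EIG} and Proposition~\ref{prop:NCE} are not specific to the horizon $T$: for any $t \in \{1, \ldots, T\}$, running the same policy $\pi$ for only $t$ steps yields a history $h_t$, and the identical argument gives $\mathbb{E}_{p(\theta_{0:L})p(h_t \mid \theta_0, \pi)}[g(h_t, U_\psi; L)] = \mathcal{L}_t^{\rm NCE}(\pi, U_\psi; L) \le \mathcal{I}_t(\pi) = I(\theta; h_t \mid \pi)$, with the inequality becoming an equality in the limit as $U_\psi$ approaches the optimal critic and $L \to \infty$. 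So first I would state this ``truncated'' version of~\eqref{eqn:NCE_IG} as the workhorse fact.

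Next I would take expectations of the dense reward $r(s_t, a_t) = g(h_{t+1}, U_\psi; L) - g(h_t, U_\psi; L)$ under $p(\theta_{0:L})p(h_{t+1} \mid \pi, \theta_0)$ and split by linearity of expectation. For the first term, the expectation over $p(\theta_{0:L})p(h_{t+1}\mid\pi,\theta_0)$ of $g(h_{t+1}, U_\psi; L)$ is exactly $\mathbb{E}_{p(\theta_{0:L})p(h_{t+1}\mid\theta_0,\pi)}[g(h_{t+1},U_\psi;L)]$, which by the truncated fact tends to $I(\theta; h_{t+1}\mid\pi)$. For the second term, $g(h_t, U_\psi; L)$ depends on the trajectory only through $h_t$ and $\theta_{0:L}$, and marginalizing $p(\theta_{0:L})p(h_{t+1}\mid\pi,\theta_0)$ over $(\xi_{t+1}, y_{t+1})$ recovers $p(\theta_{0:L})p(h_t\mid\pi,\theta_0)$, so its expectation equals $\mathbb{E}_{p(\theta_{0:L})p(h_t\mid\theta_0,\pi)}[g(h_t,U_\psi;L)]$, which tends to $I(\theta; h_t\mid\pi)$. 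Subtracting gives $\mathbb{E}[r(s_t,a_t)] \to I(\theta; h_{t+1}\mid\pi) - I(\theta; h_t\mid\pi)$ in the stated limit.

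Finally I would identify this difference with a conditional mutual information. By the chain rule for mutual information (as used in Appendix~\ref{app:EIG_decomp}), $I(\theta; h_{t+1}\mid\pi) = I(\theta; h_t\mid\pi) + I(\theta; (\xi_{t+1}, y_{t+1})\mid h_t, \pi)$, since $h_{t+1} = h_t \cup \{(\xi_{t+1}, y_{t+1})\}$; moreover, conditioned on $h_t$ and $\pi$, the design $\xi_{t+1}$ carries no information about $\theta$ beyond what is in $h_t$, so the increment reduces to the marginal EIG $I_{h_t}(\xi_{t+1})$ averaged appropriately, matching the quantity the paper writes as $I(\theta; h_{t+1}\mid\pi, h_t)$. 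Hence $I(\theta; h_{t+1}\mid\pi) - I(\theta; h_t\mid\pi) = I(\theta; h_{t+1}\mid\pi, h_t)$, completing the argument.

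The main obstacle is making the two limits (optimal critic, $L\to\infty$) rigorous and uniform in $t$: the truncated version of Proposition~\ref{prop:NCE} gives pointwise-in-$t$ convergence for free, but one must be slightly careful that the \emph{same} learned critic $U_\psi$ is being used at every horizon $t$ (the critic in the paper is trained on length-$T$ histories), so the ``optimal critic'' statement should be read as: for the restriction of the optimal critic to shorter histories the identity still holds, which follows because the optimal-critic characterization $U^*(h,\theta) = \log p(h\mid\theta,\pi) + c(h)$ is stated for arbitrary histories $h$. Everything else is routine marginalization and an application of the mutual-information chain rule.
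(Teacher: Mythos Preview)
Your argument is correct and rests on the same ingredients as the paper's proof: the ``truncated'' InfoNCE identity $\mathbb{E}_{p(\theta_{0:L})p(h_t\mid\theta_0,\pi)}[g(h_t,U_\psi;L)]\to I(\theta;h_t\mid\pi)$ for each $t$, plus the chain rule for mutual information. The organizational difference is that the paper proves the claim by induction on the time index, at each step invoking the truncated identity for the full sum $\sum_{i=0}^{k-1}r(s_i,a_i)=g(h_k,U_\psi;L)$ and then subtracting off the previously-established terms; you instead apply the truncated identity directly at the two adjacent times $t$ and $t+1$ and subtract. Your route is shorter and avoids the induction entirely, since once one grants that the truncated bound is tight in the limit for every $t$, the result is immediate by linearity and marginalization. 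The paper's induction buys nothing extra---it is just a more roundabout way of invoking the same per-$t$ limit. Your caveat about the critic being defined on variable-length histories (so that the optimal-critic form $U^*(h,\theta)=\log p(h\mid\theta,\pi)+c(h)$ is available at every $t$) is well taken and is a point the paper leaves implicit.
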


\begin{proof}
We prove the result by induction. First, consider the base case in which $T=1$. In this case,~\eqref{eqn:NCE_IG} implies that:
\begin{flalign*}
     \mathbb{E}_{p(\theta_{0:L})p(h_1 \mid \theta_0, \pi)}[r(s_0, a_0)] &\overset{(a)}= \mathbb{E}_{p(\theta_{0:L})p(h_1 \mid \theta_0, \pi)}[g(h_1, U_\psi; L)-g(h_{0}, U_\psi; L)] \\ &\overset{(b)}= \mathbb{E}_{p(\theta_{0:L})p(h_1 \mid \theta_0, \pi)}[g(h_1, U_\psi; L)] \overset{(c)}\le I(\theta; h_1 \mid \pi),
\end{flalign*}
where (a) uses the definition of the reward, (b) uses that $g(h_0, u_\psi; L) = 0$, and (c) applies~\eqref{eqn:NCE_IG}. Since the inequality is tight in the limit as $U_\psi$ approaches the optimal critic and $L \to \infty$:
\begin{equation*}
    \mathbb{E}_{p(\theta_{0:L})p(h_1 \mid \theta_0, \pi)}[r(s_0, a_0)] \to I(\theta; h_1 \mid \pi) = I(\theta; h_1 \mid \pi, h_0),
\end{equation*}
in this limit.

Next, for $T=2$, we apply~\eqref{eqn:reward_sum} and ~\eqref{eqn:NCE_IG} to obtain:
\begin{flalign*}
    \mathbb{E}_{p(\theta_{0:L})p(h_2 \mid \theta_0, \pi)}[r(s_0, a_0) + r(s_1, a_1)] &= \mathbb{E}_{p(\theta_{0:L})p(h_1 \mid \theta_0, \pi)}[r(s_0, a_0)] + \mathbb{E}_{p(\theta_{0:L})p(h_2 \mid \theta_0, \pi)}[r(s_1, a_1)] \\ &= \mathbb{E}_{p(\theta_{0:L})p(h_2 \mid \theta_0, \pi)}[g(h_2, U_\psi; L)] \le I(\theta; h_2 \mid \pi).
\end{flalign*}
In the limit as $U_\psi$ approaches the optimal critic and $L \to \infty$, the inequality is tight, and furthermore, invoking the $T=1$ case above, the expectation of $r(s_0, a_0)$ approaches $I(\theta; h_1 \mid \pi, h_0)$. Therefore, in the limit:
\begin{equation*}
    \mathbb{E}_{p(\theta_{0:L})p(h_2 \mid \theta_0, \pi)}[r(s_1, a_1)] \to I(\theta; h_2 \mid \pi) - I(\theta; h_1 \mid \pi) \overset{(a)}= I(\theta; h_2 \mid \pi, h_1),
\end{equation*}
where (a) holds by the chain rule for mutual information.

Next, we assume that for some $T = k$, $\mathbb{E}_{p(\theta_{0:L})p(h_{t+1} \mid \theta_0, \pi)}[r(s_t, a_t)]$ approaches $I(\theta; h_{t+1} \mid \pi, h_t)$ in the limit for each $t \in \{0, \ldots, k-2\}$.

By~\eqref{eqn:reward_sum} and ~\eqref{eqn:NCE_IG}:
\begin{equation*}
    \mathbb{E}_{p(\theta_{0:L})p(h_k \mid \theta_0, \pi)}\left[\sum_{t=0}^{k-1} r(s_t, a_t)\right] = \mathbb{E}_{p(\theta_{0:L})p(h_k \mid \theta_0, \pi)}[g(h_k, U_\psi; L)] \le I(\theta; h_k \mid \pi),
\end{equation*}
where the inequality is tight in the limit.

By the induction hypothesis, the left-hand side approaches the following in the limit:
\begin{flalign}
    \mathbb{E}_{p(\theta_{0:L})p(h_k \mid \theta_0, \pi)}\left[\sum_{t=0}^{k-1} r(s_t, a_t)\right] &= \sum_{t=0}^{k-1} \mathbb{E}_{p(\theta_{0:L})p(h_{t+1} \mid \theta_0, \pi)}\left[r(s_t, a_t)\right] \nonumber \\ &\to \sum_{t=0}^{k-2} I(\theta; h_{t+1} \mid \pi, h_t) + \mathbb{E}_{p(\theta_{0:L})p(h_k \mid \theta_0, \pi)}[r(s_{k-1}, a_{k-1})]. \label{eqn:induction_sum}
\end{flalign}

Thus, the expression in~\eqref{eqn:induction_sum} must be equal to $I(\theta; h_k \mid \pi)$ in the limit, and so:
\begin{flalign*}
    \mathbb{E}_{p(\theta_{0:L})p(h_k \mid \theta_0, \pi)}[r(s_{k-1}, a_{k-1})] &\to I(\theta; h_k \mid \pi) - \sum_{t=0}^{k-2} I(\theta; h_{t+1} \mid \pi, h_t) \\ &\overset{(a)}= I(\theta; h_k \mid \pi) - I(\theta; h_{k-1} \mid \pi) \overset{(b)}= I(\theta; h_k \mid \pi, h_{k-1}),
\end{flalign*}
in the limit as $U_\psi$ approaches the optimal critic and $L \to \infty$, as desired, where (a) and (b) both hold by the chain rule for mutual information.

\end{proof}

\section{Experiment Details}
For all experiments, the policy and $Q$-networks are both 2 layers each with a hidden dimension of 256. To stabilize training, we borrow from RL literature learn two critic networks $U_\psi$ and $U_\psi^{\rm target}$. The critic $U_\psi$ is trained on batches from the current rollout, and the ``target" critic $U_\psi^{\rm target}$ is updated with $\tau=0.005$. The reward is calculated using the target critic $U_\psi^{\rm target}$. We observe that this provided a far more stable learning rate and reduced policy collapse. Finally, we remark that while the sparse and dense reward formulations are sum to the same value with the discount factor $\gamma=1$, we find that setting $\gamma<1$ provided for more stable training.

\begin{table}[h]
    \centering
    \caption{Training Hyperparameters}
    \begin{tabular}{|c|c|c|c|}
        \hline
        Parameter & Location Finding & SIR & Cartpole \\
        \hline
        \hline
        Learning Rate & $3\times 10^{-4}$ & $3\times 10^{-4}$ & $3\times 10^{-4}$ \\
        Initial Random Timesteps & $10^{4}$ & $10^{4}$ & $10^{4}$ \\
        Batch Size & $256$ & $256$ & $256$ \\
        Hidden Layer Size & $256$ & $256$ & $256$ \\
        \# Hidden Layers & $2$ & $2$ & $2$ \\
        \# Updates Per Timestep & $10$ & $8$ & $8$ \\
        Policy Update Frequency & 2 & 2 & 2 \\
        Policy Update Noise & 0.2 & 0.2 & 0.2 \\
        Exploration noise & 0.1 & 0.1 & 0.1 \\
        $\gamma$ & $0.99$ & $0.99$ & $0.99$ \\
        $\tau$ & $5 \times 10^{-3}$ & $5\times 10^{-3}$ & $5\times 10^{-3}$ \\
        Policy Noise & 0.2 & 0.2 & 0.2 \\
        \# Parallel Environments & 4096 & 512 & 2048 \\
        Critic History Encoder $E_{\psi_h}(h_t)$ & Attention & LSTM & LSTM \\
        Critic Learning Rate & $[1\times10^{-4}, 5\times10^{-4}]$ & $3\times10^{-4}$ & $3\times10^{-4}$ \\
        \hline
    \end{tabular}
    \label{tab:hyperparams_td3}
\end{table}

\end{document}